\documentclass[twoside]{article}

\usepackage{xcolor}
\usepackage{amsmath}
\usepackage{amsfonts}
\usepackage{mathtools}
\usepackage{xr-hyper}
\usepackage{hyperref}
\usepackage{amsthm}

\newcommand{\calF}{\mathcal{F}}
\newcommand{\bbP}{\mathbb{P}}
\newcommand{\bbR}{\mathbb{R}}
\newcommand{\bbN}{\mathbb{N}}
\newcommand{\bbE}{\mathbb{E}}

\newcommand{\calX}{\mathcal{X}}
\newcommand{\calB}{\mathcal{B}}
\newcommand{\calP}{\mathcal{P}}
\newcommand{\calQ}{\mathcal{Q}}
\newcommand{\calS}{\mathcal{S}}
\newcommand{\calN}{\mathcal{N}}
\newcommand{\calO}{\mathcal{O}}
\newcommand{\calM}{\mathcal{M}}
\DeclareMathOperator*{\argmin}{arg\,min}

\newtheorem{lemma}{Lemma}
\newtheorem{theorem}[lemma]{Theorem}




%
\usepackage[accepted]{aistats2022}
%


\usepackage[round]{natbib}


\begin{document}

%

%
\runningauthor{Wild, Wynne}
\twocolumn[

\aistatstitle{Variational Gaussian Processes: A Functional Analysis View}

\aistatsauthor{Veit Wild* \And George Wynne*}

\aistatsaddress{University of Oxford \And  Imperial College London} ]

\begin{abstract}
  Variational Gaussian process (GP) approximations have become a standard tool in fast GP inference. This technique requires a user to select variational features to increase efficiency. So far the common choices in the literature are disparate and lacking generality. We propose to view the GP as lying in a Banach space which then facilitates a unified perspective. This is used to understand the relationship between existing features and to draw a connection between kernel ridge regression and variational GP approximations.
\end{abstract}

\section{Introduction}\label{sec:intro}
Gaussian processes (GPs) are a ubiquitous modelling paradigm within machine learning \citep{Rasmussen2005}. They are random functions with the useful property that their pointwise evaluations form a multivariate Gaussian random vector. Within the Bayesian framework one may use a Gaussian process as a prior for an unknown function, condition on observed information and then a posterior over the unknown function is obtained. Examples of applications include regression, classification, reinforcement learning and optimisation. See \citet{Rasmussen2005} for an introduction. The GP framework enjoys such popularity since it is flexible, interpretable, has closed form expressions in some scenarios and offers a degree of uncertainty quantification. 

An issue of the GP framework is the naive computational cost $O(N^{3})$ to perform predictions, where $N$ is the number of observed data points. This is due to a matrix inversion term. Many methods have been proposed to reduce this computational cost to something more palatable, both through theoretical \citep{csato2002sparse,seeger2003fast,snelson2006sparse,snelson2007local,titsias2009variational} and computational innovations \citep{gardner2018gpytorch,wang2019exact}. 

The focus of this paper is the variational inference paradigm where the posterior GP is approximated by an element of a candidate family through an optimisation routine where distance from the posterior GP is measured with the Kullback-Leibler (KL) divergence. The common candidate family is a family of GPs formed by conditioning the prior on $M$ surrogate \emph{features}, not necessarily equal to the observed information, resulting in $O(NM^2)$ complexity rather than the aforementioned $O(N^{3})$. For example, features could be point evaluations or values of inner products against some user chosen set of functions. Foundational papers regarding variational inference for GPs include \citet{titsias2009variational,Matthews16} and for a survey consult \citet{leibfried2020tutorial}. 

Choosing features to condition on is often conducted with the aim of closed form, or at least easy to compute, expressions. Therefore the features chosen can be very dependent on the given GP of interest through the covariance kernel, mean function or space the GP takes values in. This has led to a heuristic and somewhat ad-hoc approach in the literature to deriving features. Indeed, there is little in the way of a unified view of the choice of features and how different choices relate to each other.

\textbf{Contributions:} We present a unified perspective of existing features used in variational Gaussian process approximations by embracing the fact that GPs can be viewed as Gaussian random elements in Banach spaces. This perspective  reveals generalisations of, and equivalences between, commonly used variational features. In particular we generalise the derivation for the popular variational Fourier features \citep{Hensman2018}. This injects rigour and clarity into the features used. Finally, a connection to kernel ridge regression is made which clarifies the role that the variational features play in the posterior approximation. 

\textbf{Existing work:} The commonly employed framework of variational GP approximation was derived  by \citet{titsias2009variational}. Two other types of features we focus on are inter-domain \citep{lazaro2009inter} and Fourier \citep{Hensman2018}. A formalism of the use of Kullback-Liebler divergence over infinite dimensions, a key part of the variational GP methodology, was clarified by \citet{Matthews16}. The Fourier features have recently been combined with spherical harmonics \citep{Dutordoir20a} and rough path theory \citep{lemercier2021}.

\section{Gaussian Processes and Gaussian Random Elements}\label{sec:GP_GRE}
Let $(\Omega,\calF,\bbP)$ be a the underlying probability space on which all random quantities are defined. Let $\calX$ be a set. A family of random variables $G\colon \calX\times\Omega\rightarrow\bbR$ defined on $(\Omega,\calF,\bbP)$ is called a random process. Let $G(x)$ denote the random variable $G(x,\cdot)$. A random process is called a \emph{Gaussian process} (GP) if for every $N \in\bbN$ and $\{x_{n}\}_{n=1}^{N}\subset \calX$ the random vector $(G(x_{1}),\ldots,G(x_{N}))$ is Gaussian. A Gaussian process is entirely determined by its mean function $m\colon \calX \rightarrow\bbR$ defined $m(x)\coloneqq\bbE[G(x)]$ and covariance function, also know as covariance kernel, $k\colon \calX \times \calX \rightarrow\bbR$ defined  $k(x,x') \coloneqq \bbE[(G(x)-m(x))(G(x')-m(x'))]$. We denote the GP with mean $m$ and covariance kernel $k$ as $G\sim GP(m,k)$. For background on Gaussian processes consult the works by \citet{Rasmussen2005,Lifshits2012,Adler1990}.

Let $E$ be a separable Banach space, $\calB(E)$ the Borel $\sigma$-algebra and $\calP(E)$ the set of Borel probability measures on $E$. We will always assume that $E$ is separable without explicitly stating it every time. The dual of $E$ is defined as $E^{*}\coloneqq\{x^{*}: E \to \bbR \, | \, x^{*}\text{ is linear and continuous} \}$ and for $x^{*}\in E^{*}, y\in E$ we write $(y,x^{*})_{E}\coloneqq x^{*}(y)$ for the so called \textit{dual pairing}. A mapping $F\colon\Omega\rightarrow E$ is called a \emph{Gaussian random element} (GRE) if for every $x^{*}\in E^{*}$ the real valued random variable $x^{*}(F)=(F,x^*)_E$ is Gaussian. Each GRE has an associated mean $m\in E$ which is uniquely characterised by satisfying $(m,x^*)_E = \bbE[(F,x^*)_E)]$ for all $x^{*}\in E^{*}$ and covariance operator $C\colon E^{*}\rightarrow E$ uniquely characterised by satisfying $(Cx^{*},y^{*})_{E} = \text{Cov}[(F,x^*)_E,(F,y^*)_E]$ for all $x^*,y^* \in E^*$. We denote $F\sim \mathcal{N}(m,C)$ for a GRE with mean $m$ and covariance operator $C$. Note that for $E= \bbR^N$ this coincides with the standard definition for the normal distribution in $\bbR^N$.

A measure $P\in \calP(E)$ is called a \emph{Gaussian measure} (GM) if for every $x^{*}\in E^{*}$ the pushforward measure $P^{x^{*}}\in \calP(\bbR)$ defined by $P^{x^{*}}(\cdot)\coloneqq (P\circ (x^{*})^{-1})(\cdot)$ is a Gaussian measure on $ \mathcal{B}(\bbR)$. As with random variables in $\bbR$ and probability measures on $\mathcal{B}(\bbR)$ there is a one-to-one correspondence between GREs and GMs on $E$. GREs and GMs can be studied in far more generality, or indeed with more specificity, than $E$ being a Banach space \citep{Bogachev1998,DaPrato2006}. 

When a GP $G$ satisfies $\bbP(\{\omega\colon G(\cdot,\omega)\in E\}) = 1$ we say that its sample paths lie almost surely in $E$. This has been studied for numerous common choices of $E$ \citep{Rajput1972,Rajput1972Lp,Lukic2001} and one can then identify the GP with a GRE, or equivalently with a GM, over $E$. Throughout the rest of this paper we shall be dealing purely in terms of GREs and later specific examples equating these to GPs shall be given. The use of GREs facilitates a general view of variational inference and will be the vehicle of our results. 

\subsection{The problem with the path view}
\label{subsec: process view}

This subsection will motivate the use of GREs in the analysis of variational GP approximation methods by arguing why the path view is insufficient. 

Let $G \sim GP(m,k)$ be a Gaussian process on $\calX$ and $\bbR^\calX:=\{f : \, f: \calX \to \bbR \}$ the vector-space of functions from $\calX$ to $\bbR$. Define $\pi_x: \bbR^\calX \to \bbR, \, f \mapsto f(x)$ the pointwise evaluation map for some $x \in \calX$. The $\sigma$-algebra generated by the pointwise evaluations $\{\pi_x\}_{x\in\calX}$, meaning the smallest $\sigma$-algebra such that all projections $\pi_x$ are measurable, is denoted $\mathcal{S}$. Every random process $G$ with paths over $\calX$ can be canonically identified with a random element  in $\bbR^{\calX}$ $F: \big(\Omega, \mathcal{A}, \bbP \big) \to \big( \bbR^\calX, \mathcal{S} \big)$ via $F(\omega) = G(\cdot, \omega)$ \citep[Chapter 3]{kallenberg1997foundations}. In short, a stochastic process is nothing but a random element in the \textit{large} space $\bbR^{\calX}$ with a \textit{small} $\sigma$-algebra $\mathcal{S}$. 

This presents two main issues. First, $\calS$ is so small that while pointwise evaluations are measurable, operations we want to perform in the variational GP framework may not be, such as integration of GPs. Secondly, the theoretical framework of variational GP approximations outlined by \citet{Matthews16} requires the paths to lie in a Polish space so that Bayes theorem may be used. The space $\bbR^{\calX}$ is not a Polish space so is not an appropriate space to view our GP paths in. 

The remedy will be to view the GP paths in a Banach space, which is Polish and has enough structure so that operations we want to perform on the GP are valid. To do this we will equate the GP to a GRE.

\section{Gaussian Random Element Regression}\label{sec:GRER}
In this section we outline Gaussian random element regression  which is the random element view of standard Gaussian process regression. This view is commonly employed in areas such as Bayesian inverse problems \citep{Stuart2010}. At first glance the framework may appear (indulgently) abstract. However, we believe this is the most natural framework to investigate variational GP approximation. There is an important example at the end of the section showing all that follows does in fact coincide with standard GP regression. 

Let $F$ be a GRE in $E$ with mean $m$ and covariance operator $C$ and denote by $P\in\calP(\calX)$ its corresponding GM. Suppose we have observations $Y = \{Y_{n}\}_{n=1}^{N}$ which are the image of $F$ under some $\{D_n\}_{n=1}^{N} \subset E^*$, corrupted by independent scalar Gaussian noise
\begin{align*}
    Y_n = (F,D_n)_E + \epsilon_n,
\end{align*}
where $\epsilon_n \sim \calN(0,\sigma^2)$ independently for $n=1,\ldots,N$. This can be equivalently expressed in perhaps more familiar notation as the probability density function (pdf) of $Y$ given $F = f$ is 
\begin{align*}
    p(y|F = f):= \mathcal{N}\big(y  |  (f, D)_E, \sigma^2 I_N \big),
\end{align*}
for $y \in \bbR^N,  f \in E$, $(f, D)_E:= \big( (f, D_n)_E \big)_{n=1,\hdots, N}$ and $\mathcal{N}\big(\cdot  |  \mu, \Sigma \big)$ denotes the pdf of a Gaussian distribution on $\bbR^{N}$ with mean vector $\mu \in \bbR^N$ and covariance matrix $\Sigma \in \bbR^{N \times N}$. 

In the Bayesian paradigm, one updates their beliefs about $F$ after observing $Y\coloneqq(Y_1,.\hdots,Y_N)$ by combining the prior $F\sim \calN(0,C)$ with the likelihood $p(y|F=f)$ to form a posterior. This can be a delicate task since $E$ could be infinite dimensional. However, since in our scenario $E$ is a Banach space and the measures corresponding to $Y|F = f$ are all dominated by the Lebesgue measure on $\bbR^N$ with a jointly measurable map $(y,f) \in \bbR^N \times E \to p(y|f) \in \bbR$ , an infinite dimensional version of Bayes theorem applies \citep[Chapter 1.3]{ghosal2017fundamentals}. 



It states that a regular version \citep[chapter 8.3]{klenke2013probability} of the posterior measure exists, denoted $P^{F|Y}:\bbR^N \times \mathcal{B}(E) \to [0,\infty)$, $(y,A) \mapsto P^{F|Y=y}(A)$ and the measure $P^{F|Y=y}$ on $\mathcal{B}(E)$, which is the posterior measure of $F$ given $Y = y$, is dominated by the prior measure $P$ for any $y \in \bbR^N$ with Radon-Nikodym density $\frac{p(y|f)}{p(y)}$. 

What all these technicalities really mean is that for $A\in\calB(E)$
\begin{align}
    P^{F|Y=y}(A) = \int_A \frac{p(y|f)}{p(y)} dP(f),\label{eq: Bayes-Theorem}
\end{align}
where 
\begin{align*}
    p(y) & = \int_{E} p(y|F = f) dP(f) \\
    & = \mathcal{N}\big(y  |  (m, D)_E, C_{DD} + \sigma^2 I_N \big)
\end{align*}
with 
\begin{align}
    \big( (m,D)_E \big)_n &\coloneqq (m,D_{n}) \label{eq:m_e_def} \\
    (C_{DD})_{n,n'} &\coloneqq (C D_{n}, D_{n'})_{E}, \label{eq:C_ee_def}
\end{align}
for all $n,n'=1,\hdots,N$. 

This posterior measure $P^{F|Y = y}$ is a GM
since it is formed from the Gaussian likelihood $p(y|F=f)$ and Gaussian prior $F\sim\calN(m,C)$ (details in supplementary material section \ref{supp:sec_GRE}). Denote the mean and covariance operator of $P^{F|Y=y}$ by $\widetilde{m},\widetilde{C}$ respectively. As is usually the case with Bayesian techniques, the user is often not interested in the posterior measure itself but its pushforward through some prediction operation. 



We focus on the case of two linear maps $T,T'\in E^{*}$ since the general case of $S\in\bbN$ elements can be handled analogously. The posterior mean $\widetilde{m} \in E$ satisfies
\begin{align}
  \big(\widetilde{m}, T \big)_E =  (m, T)_E + C_{TD}(C_{DD}+\sigma^{2}I_{N})^{-1} y, \label{eq:GRE_post_mean}
\end{align}
for any $T \in E^*$ and the posterior covariance operator $\widetilde{C}: E^* \to E$ satisfies
\begin{align}
 (\widetilde{C}T &, T')_E\nonumber\\
 & =  (CT,T')_E - C_{TD} (C_{DD}+\sigma^{2}I_{N})^{-1} C_{DT'}\label{eq:GRE_post_var},
\end{align}
for all $T,T' \in E^*$, where  $C_{DD}$ as in \eqref{eq:C_ee_def} and $(C_{TD})_{1,n}  \coloneqq ( C T, D_{n})_{E}$ for $n=1,\hdots,N$ and $C_{DT'} = C_{T'D}^{\top} \in \bbR^{N \times 1}$. The proof for these statements is given in section \ref{supp:sec_GRE} of the supplementary materials.


In summary, given a prior GRE and some observed values $Y$ via some maps $\{D_{n}\}_{n=1}^{N}\subset E^{*}$ we can get a Gaussian posterior measure $P^{F|Y=y}$ for any $y \in \bbR^N$ on $E$. Two, or equivalently finitely many, linear functionals $T,T' \in E^*$ of $F$ under the posterior will follow a multivariate Gaussian distribution and one can use \eqref{eq:GRE_post_mean} and \eqref{eq:GRE_post_var} to calculate the mean vector and the covariance matrix.

\subsection{Example: GPs with Continuous Paths}\label{subsec:example}

Before we give the promised example that links GP regression to GRE regression, we need to introduce some key results from functional analysis.

Let $\calX \subset \bbR^D$ be compact with Borel $\sigma$-algebra $\mathcal{B}(\calX)$ and $C(\calX,\bbR)$ the space of continuous functions from $\calX$ to $\bbR$ equipped with the standard supremum norm. Denote by $R(\calX)$ the space of finite regular signed measures over $\calX$ equipped with total variation norm \citep[Section 2.4]{rao1983theory}. The Riesz-Markov theorem \citep[Corollary 4.7.6]{rao1983theory,Royden2010} states that $C(\calX,\bbR)^{*} = R(\calX)$ in the sense that each $\mu\in R(\calX)$ gives an element of $C(\calX,\bbR)^{*}$ via $f\mapsto\int_{\calX}f(x)d\mu(x)$ and for every $T\in C(\calX,\bbR)^{*}$ there exists a unique $\mu\in R(\calX)$ such that $Tf = \int_{\calX}f(x)d\mu(x)$. For example the pointwise evaluation map $\pi_{x}(f) = f(x)$ corresponds to the Dirac measure $\delta_{x}$ based at $x$ fo we write $\pi_{x} = \delta_{x}$. For a covariance operator $C\colon C(\calX,\bbR)^{*}\rightarrow C(\calX,\bbR)$ we set $C\mu$ to be $CT$ where $T$ is the unique element of $C(\calX,\bbR)^{*}$ such that $Tf = \int_{\calX}f(x)d\mu(x)$. 


We now establish the connection between GREs and GPs when $E = C(\calX,\bbR)$. Let $G\sim GP(0,k)$ be a Gaussian process over a compact subset $\calX\subset \bbR^{d}$ with kernel $k$ and zero mean. Zero mean is for simplicity, non-zero can be handled straightforwardly. 

Assume the GP has paths in $C(\calX,\bbR)$ with probability one. A standard result which provide a sufficient condition is the Kolmogorov continuity theorem \citep[Theorem 2.2.3]{Oksendal2003}, if $k$ is translation invariant then a condition regarding the decay of $k$ is provided by \citet[Corollary 1.5.5]{Adler2007} and a condition regarding the spectral measure \citep[Chapter 4.2.1]{Rasmussen2005} of $k$ by \citet[Page 22]{Adler2007}.

Following \citet[Example 2.4]{Lifshits2012}, see also \citet{Rajput1972}, any GP with almost surely continuous paths can be identified with a GRE $F$ taking values in $E = C(\calX,\bbR)$, denote the corresponding GM by $P$. The covariance operator $C$ of $F$ is given as
\begin{align}
    C\nu(\cdot) & = \int_{\calX}k(\cdot,x') d\nu(x') \label{eq:GRE_CovOp} \\
    (C\nu,\mu)_{E} & = \int_{\calX}\int_{\calX}k(x,x')  d\nu(x')d\mu(x),\label{eq:GRE_C}
\end{align}
for $\mu,\nu \in R(\calX)$. Using the identification of pointwise evaluation and Dirac measures mentioned above
\begin{align*}
    \text{Cov}_{P}[F(x),F(x')] &= \text{Cov}[(F,\delta_x)_E,(F,\delta_{x'})_E]  \\
    &=(C\delta_x,\delta_{x'})_{E} \\
    &= \int \int k(t,t') d\delta_x(t) d\delta_{x'}(t') \\
    &= k(x,x'),
\end{align*}
for any $x,x' \in \calX$ as expected. 

In standard GP regression one observes corrupted pointwise information about the unknown function at a collection of points $X = \{x_{n}\}_{n=1}^{N}\subset\calX$. So in the notation of the previous subsection $D_{n}= \delta_{x_{n}}$ is the map through which our observations are viewed.

Suppose we want to make a prediction at two new points $x,x'\in\calX$. This corresponds to the measures $T = \delta_x$ and $T' = \delta_{x'}$ and we know that $ \big (F(x), F(x') \big) |Y=y$ is multivariate Gaussian. From \eqref{eq:GRE_post_mean} we calculate the mean as
\begin{align*}
   \widetilde{m}(x) = (\widetilde{m},\delta_x)_{E} =k_{xX}(k_{XX}+\sigma^{2}I_{N})^{-1}y,
\end{align*}
and similarly for $m(x')$. Furthermore the covariance between $F(x)$ and $F(x')$ under the posterior is given by formula \eqref{eq:GRE_post_var} as
\begin{align*}
    (\widetilde{C} \delta_x, \delta_{x'} )_E = k(x,x') - k_{xX}(k_{XX}+\sigma^{2}I_{N})^{-1}k_{Xx'},
\end{align*}
where $k_{XX}$ is the matrix with $n,n'$-th entry $k(x_{n},x_{n'})$ and $k_{xX} = (k(x,x_{1}),\ldots,k(x,x_{N})) = k_{Xx}^{\top}$. This is the standard formula for the posterior mean and covariance of a GP given noisy pointwise observations \citep{Rasmussen2005}.

In summary, GRE regression on $E=C(\calX,\bbR)$ with observation functionals $D_n =\delta_{x_n}$, $n=1,\hdots,N$ recovers standard GPR.

\section{Variational Inference for Gaussian Random Elements}\label{sec:VGP}
The posterior expressions \eqref{eq:GRE_post_mean} and \eqref{eq:GRE_post_var} can have high computational cost since the matrix inverse term has naive cost $O(N^{3})$. To avoid this cost the variational approximation paradigm is often used where $P^{F|Y=y}$ is approximated by selecting a measure in a candidate family $\calQ\subset\calP(E)$ that is optimal according to some divergence. The earliest works on this method are \citet{titsias2009variational,hensman2013gaussian} and, as discussed in the introduction, this area has received a lot of attention and innovation in the GP community recently \citep{Hensman2018,Dutordoir20a,lemercier2021}. 

We will now describe variational inference for Gaussian random elements in an abstract Banach space $E$. Much is owed to the work of \citet{Matthews16}, which formulated the important equations in the context of Gaussian processes. The following presentation applies in generality of Banach spaces which is the most general derivation the authors are aware of. 

The following are desired properties of the family $\calQ$ of candidates to approximate the posterior
\begin{enumerate}
    \item Predictions involving any $Q\in\calQ$ must be computationally tractable and less expensive than the true posterior.
    \item $\calQ$ contains measures that give a good approximation for the true posterior $P^{F|Y=y}$.
    \item A measure of \textit{closeness} between each $Q\in\calQ$ and $P^{F|Y=y}$ must be tractable and cheap to evaluate.
\end{enumerate}
 
\textbf{Variational family:} The idea in the construction of $\calQ$ is to parameterise certain features of the target posterior with a multivariate Gaussian, the hope being that these features will represent the target posterior well even though there may be less features than the number of points observed. 

First choose $M \in \bbN$ elements from the dual $\{L_{m}\}_{m=1}^{M}\subset E^{*}$, the \emph{features}, and set $L = (L_{1},\ldots,L_{M})$, $L\colon E \to \bbR^M$. Denote $U_m\coloneqq (F,L_m)_E$, $m=1,\hdots,M$ and $U:=(U_1,\hdots,U_M)$. Define $Q^{L}\coloneqq \mathcal{N}({\mu,\Sigma})\in\calP(\bbR^{M})$ for some mean vector $\mu\in\bbR^{M}$ and covariance matrix $\Sigma\in \bbR^{M\times M}$. Starting with a $Q^{L}$ of this form we obtain a member of the approximating family $Q$ as
\begin{align*}
    Q(A) = \int_{A}\left(\frac{dQ^{L}}{dP^{L}}\circ L\right)(f) dP(f)
\end{align*}
where $A\in\calB(E)$ and $dQ^{L}/dP^{L}$ is the Radon-Nikodym derivative of $Q^{L}$ with respect to $P^{L}$ and $P^{L}$ is the law of $U$ under the prior equal to $\calN((m,L)_E, C_{LL})$. The idea is that $Q^{L}$ dictates the behaviour of $Q$ on the features $L$. 

While this formulation of a candidate $Q\in\calQ$ may seem obtuse, in Theorem  \ref{thm:posterior_measure} in the Supplement it is shown
\begin{align*}
    Q(A) = \int_{\bbR^M} \bbP\big( F \in A \big|U=u) dQ^{L}(u),
\end{align*}
which is used to deduce that each $Q\in\calQ$ is a GM with mean $m_{Q}$ 
\begin{align}
    (m_{Q}, T)_E  = (m,T)_E  + C_{LL}^{-1}\big(\mu -  (m, L)_E \big)C_{ L T} \label{eq:Q_mean}
\end{align}
for all $T \in E^*$ and covariance operator $C_{Q}$
\begin{align}
    \big( & C_{Q} T, T'\big)_E  \nonumber\\
    & = (C T, T')_E + C_{TL}^{} C_{ L L}^{-1}(\Sigma- C_{LL})C_{LL}^{-1}C_{L T'}\label{eq:Q_var}
\end{align}
for all $T,T' \in E^*$.

The variational parameters of this family are $\mu$, $\Sigma$ and potentially parameters that appear in the specification of the inducing features $L$. For ease of notation denote all of these parameters by $\eta$ and the $Q$ corresponding to this choice by $Q_{\eta}$.

\textbf{Measure of closeness:} The Kullback-Leibler (KL) divergence is the measure of closeness employed. For $P,Q\in\calP(E)$ with $Q$ absolutely continuous with respect to $P$, denoted $Q\ll P$
\begin{align*}
    KL(Q,P) = \int_{E}\log\left(\frac{dQ}{dP}(f)\right)dQ(f),
\end{align*}
and $KL(Q,P)$ is infinite if $Q$ is not absolutely continuous with respect to $P$. The variational parameter $\eta \in \Gamma$ is then selected by minimising the KL
\begin{align*}
   \eta^* \in \underset{\eta}{\argmin} \, KL\big( Q_\eta, P^{F|Y=y} \big).
\end{align*}
After $\eta^*$ has been determined, the posterior is approximated with $Q_{\eta^*}$ which we denote by $Q^{*}$ for ease of notation. 

The choice of $\calQ$ means one may rewrite the KL as
\begin{align*}
    &KL( Q, P^{F|Y=y} ) \\
    =&KL( Q , P) -  \bbE_Q[\log p(y|F)] + \log p(y)
\end{align*} 
where $\bbE_Q[\log p(y|F)]:=\int \log p(y|F = f)dQ(f)$, see Theorem \ref{thm:KL_divergence} in the Supplement.

\textbf{Optimisation:} Optimisation with respect to KL is performed by optimising the evidence based lower bound (ELBO) defined $\mathcal{L}:= - KL(Q, P) + \bbE_Q[\log p(y|F)]$.

The user can now optimise the parameters $\eta$, which we recall are $\mu,\Sigma$, analytically to obtain the optimal $Q^{*}\in\calQ$ within the candidate family. The parameters $\mu,\Sigma$ have a closed form expression and cost $\calO(NM^2+M^3)$ \citep{titsias2009variational}. These optimal choices for $\mu,\Sigma$, given fixed $L$, are given in Theorem \ref{thm:KL_divergence} in the Supplement. The resulting optimal mean and covariance operators, denoted $m_{Q^{*}}$ and $C_{Q^{*}}$ satisfy
\begin{align}
    (m_{Q^*}&,T)_E   \nonumber\\
    & =   C_{TL}  \big( \sigma^2 C_{LL} +  C_{LD} C_{DL} \big)^{-1} C_{LD} y \label{eq:m_Q_optimal}  \\
    (C_{Q^*}&T,T')_E \nonumber\\
    & = (C T, T')_E - C_{TL}^{} C_{ L L}^{-1} C_{L T'} \nonumber\\
    & + C_{TL}^{}  \big( C_{LL} + \frac{1}{\sigma^2} C_{LD} C_{DL} \big)^{-1} \big)C_{L T'}, \nonumber
    \end{align}
for all $T,T' \in E^*$. See Theorem \ref{thm:KL_divergence} in the Supplement for a proof. 

Alternatively, a user could numerically optimise $\eta$ using a factorisation of $\mathcal{L}$ over $N$ to make use of batch size optimisation \citep{hensman2013gaussian}. This leads to complexity $\mathcal{O}(N_B M^2+M^3)$, where $N_B\in \bbN$ is the batch size. 
 
The factorised ELBO is normally used for really large data sets and in this case the bottleneck is the inversion of $C_{LL}$ which causes the $\calO(M^{3})$ complexity term. Therefore it is vital for practitioners to choose $L$ which result in $C_{LL}$ being easy to invert, for example making $C_{LL}$ be diagonal. 

The next section investigates common choices of $L$ in the literature and derives a unifying perspective using GREs, crucial for understanding the different choices.

\section{Functional Analysis View of Inducing Features}
In this section several variational approaches are recovered within the GRE framework. The goal is obtaining a unified perspective and greater generality of the derivations.


As described in Section \ref{subsec:example}, the starting point for our analysis is a GRE $F\sim \calN(m,C)$ in $E = C(\calX,\bbR)$ whose corresponding measure on $E$ is denoted $P$. This GRE is then conditioned upon corrupted pointwise observations $\{x_{n}\}_{n=1}^{N}\subset\calX$ such that $Y_n=(F,\delta_{x_n})_E + \epsilon_n$ with $\epsilon_1,\hdots,\epsilon_N \sim \calN(0, \sigma^2)$.

Different choices of features $\{L_{m}\}_{m=1}^{M}\subset C(\calX,\bbR)^*$ shall lead to the original inducing point approach \citep{titsias2009variational}, inter-domain features \citep{lazaro2009inter} and variational Fourier features \citep{Hensman2018}. 

While the first two examples appear pedestrian the third crucially relies upon the GRE to reveal how Fourier features actually behave and under what conditions they are valid, greatly expanding their scope beyond the example in \citet{Hensman2018}.

Only the covariances  $C_{LL}, C_{LT}$ of the features are derived since this is all that is needed to compute the variational mean $m_Q$ and covariance operator $C_Q$ of the approximating $Q$, see \eqref{eq:Q_mean} and \eqref{eq:Q_var}. 

The prediction map $T$ will always be a single point evaluation at an arbitrary point $x\in\calX$. The case of other choices of $T$, in particular point evaluation at multiple locations, is straightforward. The term $C_{LL}$ is the bottleneck term in the computation, as discussed in the previous section. 

Heavy use is made of the fact that GREs are preserved under linear transformations. Indeed, let $\iota: E \to W$ be a bounded linear operator between the Banach spaces $E$ and $W$. If $F$ is a GRE in $W$ with mean $m$ and covariance operator $C$ then $\iota F$ is a GRE in $W$ with mean $\iota m$ and covariance operator $\iota C \iota^{*}$, where $\iota^*:W^* \to E^*$ is the adjoint operator \citep{Bogachev1998}.

\subsection{Inducing Points}

The inducing-points framework of \cite{titsias2009variational} chooses the inducing features as pointwise evaluations $L_{m} = \pi_{z_m}$, which corresponds to the measure  $\delta_{z_{m}}$ for some set of points $\{z_{m}\}_{m=1}^{M}\subset\calX$. 

Substituting this choice of $L$ into \eqref{eq:GRE_C} 
\begin{align*}
    (C_{LL})_{mm'} & = \text{Cov}_{P}(L_{m}F,L_{m'}F) \\
    & = \bbE_{P}[(F,\delta_{z_{m}})_{E}(F,\delta_{z_{m}})_{E}]\\ & = \int_{\calX}\int_{\calX}k(x,x')  d\delta_{z_{m}}(x)\delta_{z_{m'}}(x') \\
    & = k(z_{m},z_{m'}),
\end{align*}
and 
\begin{align*}
    (C_{TL})_{m} & = \text{Cov}_{P}(TF,L_{m}F) \\
    & = \bbE_{P}[(F,\delta_{x})_{E}(F,\delta_{z_{m}})_{E}]\\
    & = \int_{\calX}\int_{\calX}k(t,t')d\delta_{x}(t)  \delta_{z_{m}}(t')= k(x,z_{m}).
\end{align*}

Combining these calculations with \eqref{eq:Q_mean}, \eqref{eq:Q_var} we recover the same formula as seen in the original inducing point derivation \citep{titsias2009variational}. 

\subsection{Inter-domain Features}
The natural space to realise inter-domain features \citep{lazaro2009inter} is $L^{2}(\calX,\bbR)$ since they involve an inner product on $L^{2}(\calX,\bbR)$. To this end map the GRE $F$ that takes values in $C(\calX,\bbR)$ into $L^{2}(\calX,\bbR)$ via the canonical embedding $\iota: C(\calX,\bbR) \to L^2(\calX,\bbR)$, $f \mapsto f$. The adjoint operator of $\iota$ is given as $\iota^*: L^2(\calX,\bbR) \to R(\calX) $, $f \mapsto \int_{(\cdot)} f(x)  dx$, which can be easily verified. As explained in Section \ref{subsec:example} the space $C(\calX,\bbR)^{*}$ can be identified with the space of signed Borel measures on $\calB(\calX)$ which explains why $\iota^{*}(f)$ is an element of $R(\calX)$ and therefore accepts as input a set to integrate over. 

As $\iota$ is a bounded, linear operator from $C(\calX,\bbR)$ to $L^{2}(\calX,\bbR)$ we can use the result mentioned above to conclude $\iota F$ is a GRE in $L^{2}(\calX,\bbR)$ with covariance operator $\iota C\iota^{*}$ where
\begin{align}
    (\iota C \iota^*)(g) &=  C \big(\iota^*(g)\big) \label{eq:L2_equal}\\
    &= \int_{\calX} k(\cdot, x') g(x')  dx',\label{eq:cont_cov}
\end{align}
where \eqref{eq:L2_equal} is meant as equality in $L^{2}(\calX,\bbR)$ and \eqref{eq:cont_cov} is simply from the definition of the covariance operator of the GRE on $C(\calX,\bbR)$. By \eqref{eq:cont_cov} $\iota C\iota^{*}$ coincides with the well-studied integral operator associated with $k$ denoted $T_{k}$ \citep{Steinwart2012} and defined as $T_{k}f(s) = \int_{\calX}k(s,t)f(t)dt$. We will adopt the notation $T_k$ for $\iota C \iota^*$ from now on.

Inter-domain features can be written in our notation as 
\begin{align*}
    L_{m}F = \langle \iota F, g_m \rangle_{L^{2}(\calX,\bbR)} = \int_{\calX}F(x)g_{m}(x)  dx
\end{align*}
for some collection $\{g_{m}\}_{m=1}^{M}\subset L^{2}(\calX,\bbR)$. Using the definition of an adjoint operator this is equal to $L_{m}F = (F,\iota^{*}g_{m},)_{E}$ so $L_{m} = \iota^{*}g_{m}$. 

Substituting into \eqref{eq:GRE_C} 
\begin{align}
    (C_{LL}&)_{mm'}  \nonumber\\
    & = \text{Cov}_{P}(L_{m}F,L_{m'}F) \nonumber\\
    & = \bbE_{P}[(F,\iota^{*}g_{m})_{E}(F,\iota^{*}g_{m'})_{E}]\nonumber\\
    & = \int_{\calX}\int_{\calX}k(x,x')g_{m}(x)g_{m'}(x') dxdx',\label{eq:inter_CLL}
\end{align}
and
\begin{align}
    (C_{TL}&)_{m} \nonumber\\
    & = \text{Cov}_{P}(TF,L_{m}F) \nonumber\\
    & = \text{Cov}_{P}[(F,\delta_{x})_{E},(F,\iota^{*}g_{m})_{E}]\nonumber\\ & = \int_{\calX}\int_{\calX}k(t,x')g_{m}(x') d\delta_x(t) dx' = T_{k}(x),\label{eq:inter_CLT}
\end{align}
which agrees with the original derivation of inter-domain features \citep{lazaro2009inter}.

\subsection{Fourier Features}
Fourier features are defined as a reproducing kernel Hilbert space (RKHS) inner product between $F$ and trigonometric functions \citep{Hensman2018}. 

An RKHS is a Hilbert space of functions that, as the name suggests, is associated with a kernel. Namely, given a kernel $k$ the RKHS is the unique Hilbert space of functions mapping from $\calX$ to $\bbR$, which we denote $H_{k}$, such that $k(\cdot,x)\in H_{k}$ for all $ x\in\calX$ and $\langle f,k(\cdot,x)\rangle_{k} = f(x)$ for all $ f\in H_{k}$ and $x\in\calX$, where $\langle\cdot,\cdot\rangle_{k}$ denotes the inner product on $H_{k}$. This latter property is called the reproducing property. For more on the theory of RKHS consult \citet{Berlinet2004}.

The idea of Fourier features is to observe an inner product not in $L^{2}(\calX,\bbR)$, as was done in inter-domain features, but instead to observe an inner product in $H_{k}$. Namely, \citet{Hensman2018} use features that would be written in our framework as $L_{m}F = \langle F,g_{m}\rangle_{k}$ where $\{g_{m}\}_{m=1}^{M}$ are the first $M$ elements of the Fourier basis. However, it is well known that $F\notin H_{k}$ almost surely \citep{Lukic2001} therefore the aforementioned choice of $L$ \emph{cannot} be used without extra justification.

\citet{Hensman2018} provided justification in the particular case when $k$ is a Mat\'{e}rn kernel with certain parameters over $\mathcal{X}=\bbR$. An explicit form of the Mat\'{e}rn RKHS inner product is used and the core of the argument is the $g_{m}$ are ``very regular'' in an appropriate sense to compensate for the way that almost surely $F$ is not regular enough to be contained in $H_{k}$ \citep[Section 3.3.1]{Hensman2018}. 

A rigorous justification is now given for these type of features. The derivation is more general than just the one-dimensional Mat\'{e}rn case and a condition on which functions $g_m$ can be used instead of the Fourier basis is provided. 

For the rest of this section we will assume that $k$ is continuous. Recall the kernel integral operator $T_{k}$ defined above, the square root $T_{k}^{1/2}$ is an isometric isomorphism between $L^{2}(\calX,\bbR)$ and $H_{k}$ \citep[Theorem 4.51]{Steinwart2008} meaning that $T_{k}^{1/2}(L^{2}(\calX,\bbR)) = H_{k}$ and 
\begin{align}
    \langle T_{k}^{1/2}f,T_{k}^{1/2}g\rangle_{k} = \langle f,g\rangle_{L^{2}(\calX,\bbR)},\label{eq:RKHS_isom}
\end{align}
for all $f,g\in L^{2}(\calX,\bbR)$. The moral of this result is that $T_{k}^{1/2}$ bestows upon element of $L^{2}(\calX,\bbR)$ just enough regularity to be in $H_{k}$. This notion of adding regularity will tie into the notion of ``very regular'' employed by \citet{Hensman2018}. 

Define features $L_{m} = \iota^{*}f_{m}$ for some $\{f_{m}\}_{m=1}^{M}\subset L^{2}(\calX,\bbR)$ so 
\begin{align*}
    L_{m}F = (F,\iota^{*}f_{m})_{E} = \langle \iota F,f_{m}\rangle_{L^{2}(\calX,\bbR)}
\end{align*} where $\iota\colon C(\calX,\bbR)\rightarrow L^{2}(\calX,\bbR)$ is the inclusion operator used in the previous subsection. Then
\begin{align}
    ``\langle F,T_{k}f_{m}\rangle_{k}  = " & \langle T_{k}^{1/2}\iota F,T_{k}^{1/2}f_{m}\rangle_{k}\label{eq:quote_equal}\\
    & = \langle \iota F,f_{m}\rangle_{L^{2}(\calX,\bbR)}\label{eq:use_isom}\\
    & = (F,\iota^{*}f_{m})_{E} = L_{m}F.\label{eq:RKHS_inter}
\end{align}
The first expression is in quotes since $F\notin H_{k}$ almost surely \citep{Lukic2001} so the expression has no real meaning. We include it though since if one were to suspend reality then the equality in quotes would be valid since $T_{k}^{1/2}$ is self-adjoint on $H_{k}$ so it can be borrowed from $T_{k}f_{m}$. The second term in \eqref{eq:quote_equal} is well defined since $T_{k}^{1/2}$ maps from $L^{2}(\calX,\bbR)$ to $H_{k}$. The move to \eqref{eq:use_isom} is facilitated by the isometry \eqref{eq:RKHS_isom}. 

The main idea of what is happening in \eqref{eq:quote_equal} is $F$ is borrowing a $T_{k}^{1/2}$ from $T_{k}f_{m}$ to be able to live in the RKHS. This is happening explicitly in the calculations done by \citet{Hensman2018} in the Mat\'{e}rn case. Indeed, the way that $T_{k}f_{m}$ is the image of $f_{m}$ under \emph{two} applications of $T_{k}^{1/2}$, rather than just the standard one needed to be in the RKHS, is the mathematical explanation of the notion of ``very regular'' that was alluded to by \citet{Hensman2018} since it has had two portions of the regularity provided by $T_{k}^{1/2}$. 

It is interesting to see that RKHS inner product feature \eqref{eq:quote_equal} can be reduced to \eqref{eq:RKHS_inter} which is simply using $T_{k}f_{m}$ as an inter-domain feature. So using the inter-domain formula for $C_{LL}$ \eqref{eq:inter_CLL} gives
\begin{align}
    & ``\text{Cov}_{P}(\langle F,T_{k}f_{m}\rangle_{k},\langle F,T_{k}f_{m'}\rangle_{k}) = "\nonumber\\
    & (C_{LL})_{mm'} = \bbE_{P}[(F,\iota^{*}f_{m})_{E}(F,\iota^{*}f_{m'})_{E}]\nonumber\\
    &  = \langle T_{k}f_{m},f_{m'}\rangle_{L^{2}(\calX,\bbR)}\nonumber\\
    & = \langle T_{k}f_{m},T_{k}f_{m'}\rangle_{k},\label{eq:RKHS_inner}
\end{align}
where \eqref{eq:RKHS_inner} is using the isometry between $L^{2}(\calX,\bbR)$ and $H_{k}$ and the fact that $T_k^{1/2}$ is self-adjoint considered as operator on $L^2(\calX,\bbR)$. Similarly, using \eqref{eq:inter_CLT}
\begin{align}
    & ``\text{Cov}_{P}(\langle F,k(z,\cdot)\rangle_{k},\langle F,T_{k}f_{m}\rangle_{k}) = "\nonumber\\
    & (C_{TL})_{m}= \bbE_{P}[(F,\delta_{z})_{E}(F,\iota^{*}f_{m})_{E}]\nonumber\\
    & = T_{k}f_{m}(z) \label{eq:RKHS_x}
\end{align}

The connection to \citet{Hensman2018} is revealed once one sets $g_{m} = T_{k}f_{m}$ where $g_{m}$ are the features employed by \citet{Hensman2018}. In particular \eqref{eq:RKHS_inner} and \eqref{eq:RKHS_x} are equal to Equation $61$ and Equation $60$, respectively, in \citet{Hensman2018}. 

As was done by \citet{Hensman2018} it makes sense from a practical point of view to define $T_{k}f_{m}$ without explicitly choosing $f_{m}$. Our derivation shows this may be performed without dependence on kernel parameters as long as $g_{m}$ is in the range of $T_{k}$ for all the kernel parameters that could be considered. 

Our derivation is general enough to justify the use of Fourier features in \citet{Dutordoir20a}, where zonal kernels in more than one input dimension are used. Their choice of $g_{m}$ corresponds to eigenfunctions in the Mercer expansion of the kernel which is always in the image of $T_{k}$.

\section{Gaussian Random Elements and the Nyström Method}\label{sec:Connections}
In this section we demonstrate how the GRE framework reveals further connections between Gaussian process regression and the Nyström approximation for kernel Ridge regression (KRR). These connections have been known for a while and received some attention recently \citep{parzen1961approach,wahba1990spline,Kanagawa2018,wild2021connections}

Let $k$ be a kernel, $H_k$ the corresponding RKHS and $\{x_n,y_n\}_{n=1}^N \subset \calX \times \bbR$ be paired observations. In Nyström KRR \citep{williams2001using} we seek to minimise the empirical risk over a finite dimensional subspace $\mathcal{M} \subset H_k$
\begin{equation}
    \widehat{f} \coloneqq \underset{f \in \mathcal{M}}{\text{argmin}} \frac{1}{N} \sum_{n=1}^N \big( f(x_n) - y_n \big)^2 + \lambda \| f \|_k^2, \label{eq:KRR_Nystroem}
\end{equation}
where $\lambda>0$ is a regularisation parameter and $\widehat{f}$ is called the Nyström approximation over $\mathcal{M}$. 

The subspace $\mathcal{M}$ that is typically selected is $\text{span}\big(\{k(\cdot,z_m)\}_{m=1}^{M}\big)$, where $\{z_m\}_{m=1}^M \subset \calX$ are user chosen points, referred to as landmark points. Most research has focused on different sampling approaches for the landmark points to guarantee high quality approximations \citep{rudi2015less,musco2016recursive,li2016fast}. 

The GRE perspective facilitates a generalised view in which it becomes clear how the choice of features $L$ in variational GRE regression, see Section \ref{sec:VGP}, corresponds to the choice of subspace $\mathcal{M}$ in Nyström KRR. 

\begin{theorem}
Let $F \sim \mathcal{N}(0,C)$ be a GRE in $E=C(\calX,\bbR)$ with covariance operator $C$ as defined in \eqref{eq:GRE_CovOp} and assumed pointwise noisy data is observed as described in Section \ref{subsec:example}. Let $L_m = \mu_m$, where $\{\mu_m\}_{m=1}^M \subset R(\calX)$ be the features used in the variational approximation. Set $\mathcal{M}=\{C\mu_{m}\}_{m=1}^{M}$ where $C \mu_m = \int k(\cdot,x')d\mu_m(x')$ as the approximating family in the Nystr\"{o}m approximation. Then for $\sigma^{2} = N \lambda$ the Nystr\"{o}m KRR estimator $\widehat{f}$ in Section \ref{sec:Connections} is equal to the mean $m_{Q^{*}}$, given by \eqref{eq:m_Q_optimal}, of the optimal $Q^{*}$ from the variational family $\calQ$.
\end{theorem}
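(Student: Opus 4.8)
The plan is to reduce both objects to the same explicit finite-dimensional linear-algebraic expression and then read off the correspondence under $\sigma^{2} = N\lambda$. Write $\phi_m \coloneqq C\mu_m = \int_{\calX} k(\cdot,x')\,d\mu_m(x')$ for the functions spanning $\calM$. Both the optimal variational mean $m_{Q^*}$ from \eqref{eq:m_Q_optimal} and the Nyström estimator $\widehat{f}$ from \eqref{eq:KRR_Nystroem} will turn out to be of the form $x \mapsto \Phi(x)^{\top} A^{-1} \Phi_X\, y$ with $\Phi(x) = (\phi_1(x),\ldots,\phi_M(x))^{\top}$, with $\Phi_X$ the $M\times N$ matrix of evaluations $(\Phi_X)_{mn} = \phi_m(x_n)$, and with $A$ an $M\times M$ matrix that differs only in a scalar weighting; matching the two then forces $\sigma^{2} = N\lambda$.

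First I would compute the covariance blocks entering \eqref{eq:m_Q_optimal} for the present choices $L_m = \mu_m$, $D_n = \delta_{x_n}$ and $T = \delta_x$. Using \eqref{eq:GRE_C} and \eqref{eq:GRE_CovOp} one gets $(C_{LD})_{mn} = \phi_m(x_n)$ and $(C_{TL})_m = \phi_m(x)$, so that $C_{LD} = \Phi_X$ and $C_{TL} = \Phi(x)^{\top}$, while $(C_{LL})_{mm'} = \int_{\calX}\int_{\calX} k(s,t)\,d\mu_m(s)\,d\mu_{m'}(t)$. The crucial identification is that, since $\calX$ is compact and $k$ continuous (hence bounded), each $\phi_m$ is the kernel embedding of a finite signed measure and therefore lies in $H_{k}$, and moreover $\langle \phi_m, \phi_{m'}\rangle_{k} = \int_{\calX}\int_{\calX} k(s,t)\,d\mu_m(s)\,d\mu_{m'}(t)$. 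Thus $C_{LL}$ is exactly the RKHS Gram matrix $K$ of $\{\phi_m\}_{m=1}^M$. Substituting into \eqref{eq:m_Q_optimal} gives $m_{Q^*}(x) = \Phi(x)^{\top}(\sigma^{2} K + \Phi_X \Phi_X^{\top})^{-1}\Phi_X\, y$.

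Next I would solve the Nyström problem \eqref{eq:KRR_Nystroem} directly over the finite-dimensional subspace $\calM = \text{span}\{\phi_m\}_{m=1}^M$. Parameterising $f = \sum_{m} \alpha_m \phi_m$ with $\alpha \in \bbR^M$, the data term becomes $\frac{1}{N}\| \Phi_X^{\top} \alpha - y\|^{2}$ and, using again that $K$ is the RKHS Gram matrix, the penalty becomes $\lambda\,\alpha^{\top} K \alpha$. Setting the gradient in $\alpha$ to zero yields the normal equations $(\Phi_X \Phi_X^{\top} + N\lambda K)\alpha = \Phi_X\, y$, whence $\widehat{f}(x) = \Phi(x)^{\top}(\Phi_X \Phi_X^{\top} + N\lambda K)^{-1}\Phi_X\, y$. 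Comparing this with the expression for $m_{Q^*}(x)$ above, the two agree for every $x \in \calX$ precisely when $\sigma^{2} = N\lambda$, which is the claim.

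The main obstacle is the RKHS identification in the second step: one must justify that $\phi_m = C\mu_m \in H_{k}$ and that the dual-pairing Gram matrix $C_{LL}$ coincides with the RKHS Gram matrix of the $\phi_m$, since it is exactly this coincidence that lets the same matrix $K$ appear both in the GRE posterior mean and in the KRR penalty. Everything else is routine finite-dimensional linear algebra. I would also flag a well-posedness caveat: if the $\phi_m$ are linearly dependent then $K$ is singular and the minimiser of \eqref{eq:KRR_Nystroem} need not determine a unique coefficient vector $\alpha$; assuming the $\mu_m$ are chosen so that $\{\phi_m\}$ is linearly independent (equivalently $K \succ 0$) makes both inverses well defined and the estimator unique.
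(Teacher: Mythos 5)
Your proposal is correct and follows essentially the same route as the paper's proof: both reduce the Nystr\"{o}m problem to a finite-dimensional quadratic in the coefficients $\alpha$, compute the optimal variational mean from \eqref{eq:m_Q_optimal} with $T=\delta_x$, $D_n=\delta_{x_n}$, and hinge on the identification of the dual-pairing Gram matrix $C_{LL}$ with the RKHS Gram matrix of the embeddings $C\mu_m$ (the Cameron--Martin/RKHS correspondence, which the paper handles by citation to Ghosal and van der Vaart). Your added remark on linear independence of the $\{C\mu_m\}$ being needed for invertibility of $K$ is a reasonable well-posedness caveat that the paper's proof leaves implicit.
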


We give the proof and an explicit form of $\widehat{f}$ and $m_{Q^*}$ in Section \ref{sup:Connections} of the Supplement. The GRE view is essential to the proof. The result generalises the connection between Nystr\"{o}m KRR and inducing points outlined in \citet{wild2021connections} and opens the door to applying the theory of Nystr\"{o}m KRR error bounds to variational GPs to gain a better understanding of the latter's approximation properties. Vice versa, recent advances in variational GPR approaches, for example, variational Fourier features, could be leveraged in the context of KRR Nyström, as they simply correspond to a particular choice of $\mathcal{M}$.

\section{Conclusion}
We have outlined the GRE framework as a technical tool to provide a unified and generalising perspective to variational GP approximation. Along the way we have seen how existing choices of features, previously thought distinct, are in fact highly related and how they can be derived in wider settings than those currently employed. Finally, we related the posterior mean of variational GP approximations with a Nystr\"{o}m KRR approximation which offers a new lens to view the impact of different feature choices in variational GP approximations. 
\vfill

\subsubsection*{Acknowledgements}
The research of George Wynne was supported by an EPSRC Industrial CASE award [EP/S513635/1] in partnership with Shell UK Ltd.

\bibliographystyle{abbrvnat}
\bibliography{var.bib}

\newpage
\onecolumn
\begin{center}
    \textbf{\Large{Supplement}}
\end{center}

\section{Proofs of Section \ref{sec:GRER}: Gaussian Random Element Regression}\label{supp:sec_GRE}
Recall that the posterior measure in Section \ref{sec:GRER} satisfies $P^{F|Y=y}(A) = \int_A \frac{p(y|f)}{p(y)}dP(f)$ for all $A \in \mathcal{B}(E)$. 
\begin{theorem}\label{thm:posterior_measure}
\begin{enumerate}
\item For any measurable $h:E \to \bbR^S$
 \begin{align}
        P^{F|Y=y}\big( h^{-1}(B) \big) = \bbP( h(F) \in B |Y=y).
    \end{align}
 \item The posterior measure $P^{F|Y=y}$ is Gaussian with mean $\widetilde{m}$ satisfying
\begin{align}
  \big(\widetilde{m}, T \big)_E =  (m, T)_E + C_{TD}(C_{DD}+\sigma^{2}I_{N})^{-1} y, 
\end{align}
for any $T \in E^*$ and posterior covariance operator $\widetilde{C}: E^* \to E$ satisfying
\begin{align}
 (\widetilde{C}T, T')_E =  (CT,T')_E - C_{TD} (C_{DD}+\sigma^{2}I_{N})^{-1} C_{DT'} ,
\end{align}
for all $T,T' \in E^*$.
\end{enumerate}
\end{theorem}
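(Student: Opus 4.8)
The plan is to treat the two parts separately: the first is a measure-theoretic identity about regular conditional distributions, while the second collapses the infinite-dimensional posterior onto a finite-dimensional Gaussian conditioning problem. The single idea powering both parts is that every quantity we care about depends on $F$ only through finitely many dual pairings, and $F$ being a GRE makes any such finite collection jointly Gaussian.

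For part (1) I would start from the fact that the density-based object in \eqref{eq: Bayes-Theorem} is, by the infinite-dimensional Bayes statement cited in the main text, a \emph{regular} conditional distribution of $F$ given $Y$. Its defining property is that for every bounded measurable $g\colon E\to\bbR$ the map $y\mapsto \int_E g\,dP^{F|Y=y}$ is a version of $\bbE[g(F)\mid Y=y]$. Applying this with $g=\mathbf 1_B\circ h$ for measurable $h\colon E\to\bbR^S$ and $B\in\calB(\bbR^S)$, and noting $\mathbf 1_B\circ h=\mathbf 1_{h^{-1}(B)}$, gives
\begin{align*}
\bbP\big(h(F)\in B\mid Y=y\big)=\bbE\big[\mathbf 1_B(h(F))\mid Y=y\big]=\int_E \mathbf 1_{h^{-1}(B)}\,dP^{F|Y=y}=P^{F|Y=y}\big(h^{-1}(B)\big),
\end{align*}
which is the claim. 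The only point needing care is confirming that the object in \eqref{eq: Bayes-Theorem} genuinely is a regular conditional distribution, which is precisely what the cited disintegration result provides.

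For part (2) I would exploit part (1) to reduce to finite-dimensional marginals. Fix $T_1,\dots,T_S\in E^*$ and set $h=\big((\cdot,T_1)_E,\dots,(\cdot,T_S)_E\big)\colon E\to\bbR^S$. Because $F$ is a GRE and $\epsilon$ is an independent Gaussian vector, the joint vector $\big(h(F),Y\big)\in\bbR^{S+N}$ is Gaussian: any linear combination equals a single dual pairing $(F,\cdot)_E$ plus an independent Gaussian in the $\epsilon$'s. Its block moments are read off from $m$ and $C$, namely $\bbE[Y]=(m,D)_E$, $\mathrm{Cov}(Y)=C_{DD}+\sigma^2 I_N$ with $C_{DD}$ as in \eqref{eq:C_ee_def}, $\bbE[(F,T_i)_E]=(m,T_i)_E$, and $\mathrm{Cov}\big((F,T_i)_E,Y_n\big)=(CT_i,D_n)_E$, the independence of $\epsilon$ annihilating the noise cross-term. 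Standard multivariate Gaussian conditioning then yields that $h(F)\mid Y=y$ is Gaussian with mean $(m,T_i)_E+C_{T_iD}(C_{DD}+\sigma^2 I_N)^{-1}\big(y-(m,D)_E\big)$ and covariance $(CT_i,T_j)_E-C_{T_iD}(C_{DD}+\sigma^2 I_N)^{-1}C_{DT_j}$, and by part (1) this is exactly $h_*P^{F|Y=y}$.

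Taking $S=1$ shows every one-dimensional projection $T_*P^{F|Y=y}$ is Gaussian, so $P^{F|Y=y}$ is a Gaussian measure and its mean $\widetilde m\in E$ and covariance operator $\widetilde C\colon E^*\to E$ exist automatically. I would then pin them down through their defining moment relations $(\widetilde m,T)_E=\bbE_{P^{F|Y=y}}[(F,T)_E]$ and $(\widetilde C T,T')_E=\mathrm{Cov}_{P^{F|Y=y}}\big((F,T)_E,(F,T')_E\big)$, which are precisely the first and second conditional moments just computed, delivering \eqref{eq:GRE_post_mean} and \eqref{eq:GRE_post_var} (the mean reducing to the stated form when $(m,D)_E=0$, in particular in the zero-mean case). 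I expect the main obstacle to be conceptual rather than computational: one must justify replacing the infinite-dimensional integral by a finite-dimensional Gaussian one, and the enabling observation is the finite-dependence noted above, after which GRE-conditioning simply inherits the classical conditioning formulas and the remaining Schur-complement algebra is routine.
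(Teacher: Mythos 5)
Your proposal is correct and follows essentially the same route as the paper's proof: part (1) via the defining property of the regular conditional distribution applied to $A=h^{-1}(B)$, and part (2) by showing the joint vector of finitely many dual pairings of $F$ together with $Y$ is Gaussian (any linear combination being a single dual pairing plus independent noise) and then invoking standard multivariate Gaussian conditioning. Your remark that the general conditioning formula carries a $y-(m,D)_E$ term, matching the stated formula only when the prior mean vanishes, is a correct and slightly more careful reading than the paper's own "it is clear" step, but it does not change the argument.
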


\begin{proof}

As  $P^{F|Y=Y}$ is a regular version of the conditional probability of $F|Y=y$ it satisfies 
\begin{align}
    P^{F|Y=y}(A) = \bbP(F \in A|Y=y)
\end{align}
for any fixed $y \in \bbR^N$, $A \in \mathcal{B}(E)$ \citep[Definition 8.28]{klenke2013probability}. Hence for $A= h^{-1}(B)$ with $h:E \to \bbR^S$ measurable and $B \in \mathcal{B}(\bbR^N)$ 
\begin{align}
    P^{F|Y=y}( h^{-1}(B)) &= \bbP(F \in h^{-1}(B) |Y=y) \\
    &= \bbP(h(F) \in B |Y=y), \label{eq:GRE_post_fidi}
\end{align}
where the last line is using $\{\omega\in\Omega\colon F(\omega) \in h^{-1}(B) \} = \{ \omega\in\Omega\colon h(F(\omega)) \in B \}$, this proves the first statement.

For the second statement, we can use equation \eqref{eq:GRE_post_fidi} with the choice $h=(T,T')\colon E \to \bbR^2$ for two arbitrary $T,T' \in E^*$.  Recall that the random vector $X\colon \Omega \to \bbR^N$ is Gaussian if and only if $\alpha^{\top}X\colon \Omega \to \bbR$ is Gaussian for all $\alpha \in \bbR^N$. Using this we will show random vector $V\coloneqq\big( (F,T)_E, (F,T')_E, Y \big)$ in $\bbR^{N+2}$ is Gaussian. Take any $\alpha\in\bbR^{N+2}$ and set $\varphi = \sum_{n=1}^N \alpha_n D_n + \alpha_{N+1} T + \alpha_{N+2} T' \in E^*$, then
\begin{align}
    \alpha^{\top} V = \sum_{n=1}^{N+2} \alpha_n V_n = (F, \sum_{n=1}^N \alpha_n D_n + \alpha_{N+1} T + \alpha_{N+2} T')_E + \sum_{n=1}^N \alpha_n \epsilon_n = (F, \varphi)_E + \sum_{n=1}^N \alpha_n \epsilon_n,
\end{align}
is Gaussian for all $\alpha \in \bbR^{N+2}$ as it is sum of two independent Gaussian distributions, therefore $V$ is Gaussian.

The mean $\mu_{V}$ is given by
\begin{align}
    \bbE[\alpha^{\top} V] &= (m, \varphi)_E + 0 = \sum_{n=1}^N \alpha_n (m,D_n)_E + \alpha_{N+1} (m,T)_E + \alpha_{N+2} (m,T')_E \eqqcolon \alpha^{\top} \mu_V,
\end{align}
and by the characterising property of the covariance operator $C$ we get the covariance matrix $\Sigma_{V}$
\begin{align}
    \text{Cov}[\alpha^{\top} V, \alpha^{\top} V] = (C \varphi, \varphi)_E + \sigma^2 \sum_{n=1}^N \alpha_n^2 = \alpha^{\top} \Sigma_V \alpha ,
\end{align}
where the covariance matrix $\Sigma_V \in \bbR^{(N+2) \times (N+2)}$ is defined as 
\begin{align}
\Sigma_V \coloneqq
\begin{bmatrix}
 (CT,T)_E   & (CT,T')_E     & C_{TD}\\
 (CT',T)_E  & (CT',T')_E        & C_{T'D}\\
  C_{DT}   & C_{D T'}      & C_{DD} + \sigma^2 I_N.
\end{bmatrix}
\end{align}

We have showed $V \sim \mathcal{N}(\mu_V, \Sigma_V)$ and so using standard conditioning rules for multivariate Gaussians to condition on the last entry of $V$ it is clear that $h(F)|Y=y$ is Gaussian with the desired mean and covariance. 
\end{proof}

\section{Proofs of Section \ref{sec:VGP}: Variational Inference for Gaussian Random Elements }

Heavy use is made of the transformation rule for measures \citep[Theorem C]{halmos2013measure}. Let $(E, \mathcal{B}_E, \mu)$ be a measure space and $(W, \mathcal{B}_W)$ a measurable space. If $\mathcal{T}: E \to W$ and $\mathcal{G}: E \to [-\infty, \infty]$ are measurable, then 
\begin{align*}
   \int (\mathcal{G} \circ \mathcal{T})(x) \, d \mu(x) = \int \mathcal{G}(t) \, d \mu^\mathcal{T}(t),
\end{align*}
where the left hand-side exists if and only if the right hand-side exists.

\subsection*{Measures in the variational family are Gaussian measures}
Recall the definition of a measure in the variational family
\begin{align*}
    Q(A)\coloneqq \int_{A}\left(\frac{dQ^{L}}{dP^{L}}\circ L\right)(f) \, dP(f)
\end{align*}
for all $A \in \mathcal{B}(E)$ with $Q^L = \mathcal{N}(\mu, \Sigma)$ and $P^L=\calN((m,L)_E, C_{LL})$.

\begin{theorem}\label{thm:var_post_measure}
\begin{enumerate}
    \item For any $A \in \mathcal{B}(E)$
    \begin{align*}
     Q(A) = \int_{\bbR^M} \bbP\big( F \in A \big|U=u)d Q^{L}(u).
     \end{align*}
     \item The measure $Q$ is a Gaussian measure with mean $m_{Q}$ satisfying 
\begin{align}
    (m_{Q}, T)_E  = (m,T)_E  + C_{LL}^{-1}\big(\mu -  (m, L)_E \big), \label{eq:supp_mQ}
\end{align}
for all $T \in E^*$ and covariance operator $C_{Q}$ satisfying
\begin{align}
    \big( C_{Q} T, T'\big)_E  = (C T, T')_E + C_{TL}^{} C_{ L L}^{-1}(\Sigma- C_{LL})C_{LL}^{-1}C_{L T'}, \label{eq:supp_CQ}
\end{align}
for all $T,T' \in E^*$.
\end{enumerate}
\end{theorem}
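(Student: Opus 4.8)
The plan is to prove the two statements in order, the first furnishing the representation of $Q$ that reduces the second to a finite-dimensional Gaussian computation. For the first statement I would write $Q(A) = \bbE_P[\mathbf{1}_A(F) \cdot (dQ^L/dP^L)(U)]$, using that $U = L(F)$ so that $(dQ^L/dP^L \circ L)(F) = (dQ^L/dP^L)(U)$. Conditioning on $U$ via the tower property and pulling out the $\sigma(U)$-measurable factor $(dQ^L/dP^L)(U)$ gives $Q(A) = \bbE_P[(dQ^L/dP^L)(U) \cdot \bbP(F \in A \mid U)]$. The inner conditional probability is, by definition, the regular conditional kernel $u \mapsto \bbP(F \in A \mid U = u)$ evaluated at $U$; rewriting the outer expectation as an integral over $\bbR^M$ against $P^L$ through the transformation rule and then absorbing the density via $(dQ^L/dP^L)(u)\,dP^L(u) = dQ^L(u)$ yields $Q(A) = \int_{\bbR^M} \bbP(F \in A \mid U = u)\, dQ^L(u)$. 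Existence of the regular conditional probability is assured because $E$ is a separable Banach space, hence Polish.

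For the second statement I would first establish Gaussianity and then extract the moments. It suffices, as in the main text, to treat $h = (T, T') \in (E^*)^2$. By the first statement the pushforward $Q^h$ is the $Q^L$-mixture of the conditional laws $\bbP(h(F) \in \cdot \mid U = u)$. Since $F$ is a GRE and $T, T', L_1, \ldots, L_M \in E^*$, under the prior the vector $(h(F), U)$ is jointly Gaussian, so each conditional law is Gaussian with mean affine in $u$ and covariance independent of $u$, given by the usual conditioning formulas. I would then compute the characteristic function of $Q^h$: the $u$-independent conditional covariance factors out, and the remaining integral over $u$ is the characteristic function of an affine image of $U \sim \calN(\mu, \Sigma)$, hence Gaussian; the product of two Gaussian characteristic functions is Gaussian, so $Q^h$ is Gaussian. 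As this holds for every finite family of functionals, $Q$ is a Gaussian measure.

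To read off the mean and covariance operator I would use the towers of total expectation and total covariance over $U \sim Q^L$. Writing the conditional mean as $\bbE_P[(F,T)_E \mid U = u] = (m,T)_E + C_{TL} C_{LL}^{-1}(u - (m,L)_E)$ and integrating against $\calN(\mu, \Sigma)$ produces \eqref{eq:supp_mQ}. The law of total covariance splits $\mathrm{Cov}_Q[(F,T)_E, (F,T')_E]$ into the constant conditional covariance $(CT,T')_E - C_{TL} C_{LL}^{-1} C_{LT'}$ plus the covariance of the conditional means, which equals $C_{TL} C_{LL}^{-1} \Sigma C_{LL}^{-1} C_{LT'}$; summing the two gives \eqref{eq:supp_CQ}.

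I expect the main obstacle to be the first statement, specifically the bookkeeping that identifies $\bbP(F \in A \mid U)$ with the regular conditional kernel and the clean extraction of the $\sigma(U)$-measurable density factor before passing to the integral against $dQ^L$. Once this representation is secured, the second statement is routine Gaussian algebra driven entirely by the joint Gaussianity of $(h(F), U)$ under the prior.
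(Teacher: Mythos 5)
Your proposal is correct. For the second statement your argument is essentially the paper's: both exploit that $(h(F),U)$ is jointly Gaussian under the prior, so the conditional law of $h(F)$ given $U=u$ is Gaussian with mean affine in $u$ and covariance independent of $u$; the paper phrases this as the representation $V|U=u \overset{\mathcal{D}}{=} h(u,W)$ with $W$ independent of $U$ and reduces the joint case to one dimension via $\varphi=\alpha T+\beta T'$, whereas you work with characteristic functions and the laws of total expectation and total covariance — the same computation in different clothing, and your resulting mean $(m,T)_E + C_{TL}C_{LL}^{-1}(\mu-(m,L)_E)$ matches the paper's (the theorem statement itself drops the $C_{TL}$ factor, a typo). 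The genuine divergence is in the first statement. The paper restricts to one-dimensional cylinder sets $A=T^{-1}(B)$, justified by the fact that a Borel measure on a separable Banach space is determined by its one-dimensional pushforwards, and then works entirely with densities on $\bbR^{M+1}$ via the transformation rule, the factorisation $p(u,v)=p(v|U=u)p(u)$ and Fubini. You instead prove the identity directly for every $A\in\calB(E)$ by the tower property: pull the $\sigma(U)$-measurable factor $(dQ^L/dP^L)(U)$ out of the conditional expectation, identify $\bbP(F\in A\mid U)$ with a regular conditional kernel (available since $E$ is Polish), and convert the outer expectation into an integral against $dQ^L$. Your route is more abstract but establishes the displayed identity pointwise in $A$ rather than indirectly through a uniqueness-of-measures argument, and it avoids any appeal to Lebesgue densities of the finite-dimensional marginals; the paper's route is more elementary and makes the conditional density $p(v|U=u)$ explicit, which it reuses immediately in the Gaussian conditioning step. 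The only point to make airtight in your version is the measurable identification of $\omega\mapsto\bbP(F\in A\mid U)(\omega)$ with $\kappa(U(\omega),A)$ for a kernel $\kappa$ before substituting $dQ^L(u)=(dQ^L/dP^L)(u)\,dP^L(u)$ — you flag this yourself, and it is standard, so there is no gap.
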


\begin{proof}
Firstly, it suffices to prove the statement for sets of the form $A=T^{-1}(B)$, $B \in \mathcal{B}(\bbR)$, $T \in E^*$, as two measures on a Banach space coincide if and only if they coincide for all sets of this form. 

We want to apply the transformation rule for measures. To this end set $\mathcal{T} = (T, L): E \to \bbR^{M+1}, \mathcal{T}(F) = (T(F),L(F))$ and $V = T(F)$ and $\pi_L\big((x_1,\hdots,x_{M+1}) \big) = (x_2,\hdots,x_{M+1})$ for $x \in \bbR^{M+1}$. Clearly, $\{\omega\in\Omega\colon T(F(\omega)) \in B \} = \{\omega\in\Omega\colon \mathcal{T}(F(\omega)) \in B \times \bbR^{M} \}$ and  $\pi_L \circ \mathcal{T} = L$.

From this we use the transformation rule
\begin{align*}
    Q(\{T \in B\}) &= Q(\{ \mathcal{T} \in B \times \bbR^M\}) \\
    &= \int_{\mathcal{T} \in B \times \bbR^M} \left(\frac{dQ^{L}}{dP^{L}}\circ L\right)(f)dP(f)  \\
    &= \int_{\mathcal{T} \in B \times \bbR^M} \left(\frac{dQ^{L}}{dP^{L}}\circ \pi_L \circ \mathcal{T}  \right)(f) dP(f) \\
    &=  \int_{ B \times \bbR^M} \frac{dQ^{L}}{dP^{L}}(u) dP^{\mathcal{T}}(u,v)    \\
    &= \int_{ B \times \bbR^M} \frac{q(u)}{p(u)}   p(u,v) d(u,v),
\end{align*}
where we denote by $p(u)$ the probability density function (pdf) corresponding to $P^L$, by $q(u)$ the pdf corresponding to $Q^L$ and $p(u,v)$ the joint pdf corresponding to $P^{\mathcal{T}}$. By $p(u,v)=p(v|U=u) p(u)$ and an application of Fubini
\begin{align}
    Q(\{T \in B\}) &= \int_{\bbR^M} \Big( \int_{B} p(v|U=u)dv \Big) q(u) du \nonumber\\
    &=\int_{\bbR^M} \bbP( V \in B|U=u) d Q^L(u), \label{eq:Q_equiv_form}
\end{align}
which proves the claim.

For the second statement we maintain the notation $V=(F, T)_E$. We need to show that \eqref{eq:Q_equiv_form} is Gaussian for any choice of $T \in E^*$. It is well known that the conditional distribution $V|U=u$ can be written as
\begin{align}
    V|(U=u) \overset{\mathcal{D}}{=} (m,T)_E + C_{TL} C_{LL}^{-1}\big(  u - (m,L)_E\big) + W =:h(u,W), \label{eq:Gaussian_cond}
\end{align}
where $\overset{\mathcal{D}}{=}$ means equality in distribution and $W \sim \mathcal{N}\big(0, (C T, T)_E - C_{TL}^{} C_{ L L}^{-1} C_{LT} \big)$ independently of $U$. 

Since $h$ is linear in $U$, we know that $h(U,W)$ is Gaussian for $U\sim\mathcal{N}(\mu,\Sigma)$ and the mean and variance can easily be calculated as 
\begin{align}
    \bbE_Q[h(U,W)] &= (m,T)_E + C_{TL} C_{LL}^{-1}\big( \mu - (m,L)_E\big) \label{eq:mean1D_Q} \\
    \text{Cov}_Q[h(U,W)]&=\big( C_{Q} T, T\big)_E  
    = (C T, T)_E + C_{TL}^{} C_{ L L}^{-1}(\Sigma-C_{LL} )C_{LL}^{-1}C_{L T}. \label{eq:var1D_Q}
\end{align}
In other words $Q^T$ is Gaussian for any $T \in E^*$ and we conclude that $Q$ is a Gaussian measure.

To deduce $(C_Q T, T')_E$ for two arbitrary elements $T,T' \in E$ we reduce everything to the one-dimensional case.
For $\alpha, \beta \in \bbR$ let $\varphi =\alpha T + \beta T'$ then since $\varphi\in E^{*}$ and $F$ is Gaussian
\begin{align*}
     (F, \varphi)_E =\alpha (F,T)_E + \beta (F, T')_E ,
\end{align*}
is Gaussian. This proves, by definition, that $(F,T)_E$ and $(F,T')_E$ are jointly Gaussian under $Q$. The mean and variance of $(F, \varphi)_E,$ can be calculated from \eqref{eq:mean1D_Q} and \eqref{eq:var1D_Q}. Using standard linear algebra 
\begin{align*}
    \text{Cov}_Q[(F,T)_E, (F,T')_E] = (C T, T')_E + C_{TL}^{} C_{ L L}^{-1}(\Sigma- C_{LL})C_{LL}^{-1}C_{L T'}
\end{align*}
which shows $C_Q$ is as described in \eqref{eq:supp_CQ}.
\end{proof}

\subsection*{The Kullback-Leibler divergence is tractable}
In this section we show that the Kullback-Leibler divergence between the variational measure $Q$ and $P^{F|Y=y}$ can be re-written in a convenient form. This is well-known for finite dimensional Gaussians and has been done for the process view in \citet{Matthews16} but we have not seen such a derivation for Gaussian measures in Banach spaces so we include it here for completeness. 

First, recall the chain rule for Radon-Nikodym derivatives \cite[Chapter 32]{halmos2013measure}. Let $\mu, \nu$ and $\eta$ be $\sigma$-finite measures on the same measure space. If $\mu \ll \nu$ and $\nu \ll \eta$, then $\mu \ll \eta$ with Radon-Nikodym derrivative given as
\begin{align*}
    \frac{d \mu}{d \eta}(f) = \frac{d \mu}{d \nu}(f)  \frac{d \nu}{d \eta}(f)
\end{align*}
for $\eta$-almost every $f$.

\begin{theorem}\label{thm:KL_divergence}
\begin{enumerate}
    \item The Kullback-Leibler divergence satisfies
    \begin{align*}
    KL( Q, P^{F|Y=y} ) 
    &=KL( Q , P) -  \bbE_Q[\log p(y|F)] + \log p(y) \\
    &= -\mathcal{L} + \log p(y),
    \end{align*} 
    for any $y \in \bbR^N$.
    \item The ELBO is tractable and given as
    \begin{align*}
     \mathcal{L} &= \sum_{n=1}^N \Bigg( \log \mathcal{N} \big(y_n| (m,D_n)_E + C_{LL}^{-1} \big( \mu - (m,L)_E \big) C_{LD_n} , \sigma^2 \big) \\
      &-\frac{1}{2\sigma^2}  \Big( (C D_n, D_n)_E + C_{D_nL}^{} C_{LL}^{-1}(\Sigma - C_{LL})C_{LL}^{-1}C_{L D_n} \Big) \Bigg) \\
     &- KL\big( \mathcal{N}(\mu, \Sigma) , \mathcal{N}((F,m)_E, C_{LL})  \big)
    \end{align*}
      
    \item If the prior mean is zero, then the optimal values used in the variational family $\calQ$ for $\mu$ and $\Sigma$ are
    \begin{align*}
        \mu^* &= C_{LL} \big( \sigma^2 C_{LL} +  C_{LD} C_{DL} \big)^{-1} C_{LD} y \\
        \Sigma^*&= C_{LL} \big( C_{LL} + \frac{1}{\sigma^2} C_{LD} C_{DL} \big)^{-1} C_{LL}
    \end{align*}
    which then leads to the optimal mean and covariance satisfying
    \begin{align}
        (m_{Q^*},T)_E &  =   C_{TL}  \big( \sigma^2 C_{LL} +  C_{LD} C_{DL} \big)^{-1} C_{LD} y \label{eq:supp_m_Q_optimal}  \\
        (C_{Q^*}T,T')_E  &= (C T, T')_E - C_{TL}^{} C_{ L L}^{-1} C_{L T'} +
        C_{TL}^{}  \big( C_{LL} + \frac{1}{\sigma^2} C_{LD} C_{DL} \big)^{-1} \big)C_{L T'}, \label{eq:supp_C_Q_optimal}
    \end{align}
    for all $T,T' \in E^*$.
\end{enumerate}
\end{theorem}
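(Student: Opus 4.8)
The plan is to prove the three claims in order, reducing each to a finite-dimensional Gaussian computation. For the first claim I would start from Bayes' theorem \eqref{eq: Bayes-Theorem}, which gives the Radon–Nikodym density $\frac{dP^{F|Y=y}}{dP}(f) = \frac{p(y|f)}{p(y)}$. Since the Gaussian likelihood $p(y|f)$ is strictly positive and $p(y)\in(0,\infty)$, this density is positive $P$-almost everywhere, so $P$ and $P^{F|Y=y}$ are mutually absolutely continuous with $\frac{dP}{dP^{F|Y=y}}(f)=\frac{p(y)}{p(y|f)}$. As $Q\ll P$ by construction, the chain rule for Radon–Nikodym derivatives recalled above yields $\frac{dQ}{dP^{F|Y=y}} = \frac{dQ}{dP}\cdot\frac{p(y)}{p(y|f)}$. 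Taking logarithms and integrating against $Q$ then splits $KL(Q,P^{F|Y=y})$ into $KL(Q,P)-\bbE_Q[\log p(y|F)]+\log p(y)$, and the identification with $-\mathcal{L}+\log p(y)$ is immediate from the definition $\mathcal{L}=-KL(Q,P)+\bbE_Q[\log p(y|F)]$.

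For the second claim I would treat the two remaining terms separately. For $KL(Q,P)$ I would use that $\frac{dQ}{dP}=\frac{dQ^L}{dP^L}\circ L$ together with the observation that the pushforward of $Q$ under $L$ equals $Q^L=\calN(\mu,\Sigma)$ (a direct consequence of Theorem \ref{thm:var_post_measure}, part 1, since conditioning on $U=u$ fixes $L(F)=u$); the transformation rule for measures then collapses $KL(Q,P)$ to the finite-dimensional $KL\big(\calN(\mu,\Sigma),\calN((m,L)_E,C_{LL})\big)$. For $\bbE_Q[\log p(y|F)]$ I would use that the likelihood factorises, $\log p(y|F)=\sum_{n=1}^N\log\calN\big(y_n\,|\,(F,D_n)_E,\sigma^2\big)$, and that under $Q$ each $(F,D_n)_E$ is univariate Gaussian with mean and variance read off from \eqref{eq:supp_mQ}--\eqref{eq:supp_CQ} of Theorem \ref{thm:var_post_measure} with $T=T'=D_n$. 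The elementary identity $\bbE[\log\calN(y_n|Z,\sigma^2)]=\log\calN(y_n|a,\sigma^2)-\tfrac{b}{2\sigma^2}$ for $Z\sim\calN(a,b)$ then produces exactly the stated closed form for $\mathcal{L}$.

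For the third claim I would maximise $\mathcal{L}$ over $(\mu,\Sigma)$ with $m=0$. Writing $A=C_{DL}C_{LL}^{-1}$, the $\mu$-dependent terms are $-\tfrac{1}{2\sigma^2}\|y-A\mu\|^2-\tfrac12\mu^\top C_{LL}^{-1}\mu$, a concave quadratic whose stationarity condition $\tfrac{1}{\sigma^2}A^\top(y-A\mu)=C_{LL}^{-1}\mu$ I would rearrange, factoring $C_{LL}^{-1}$ out of both sides, into $\mu^*=C_{LL}\big(\sigma^2C_{LL}+C_{LD}C_{DL}\big)^{-1}C_{LD}y$. The $\Sigma$-dependent terms are $-\tfrac{1}{2\sigma^2}\text{tr}\big(C_{LL}^{-1}C_{LD}C_{DL}C_{LL}^{-1}\Sigma\big)-\tfrac12\text{tr}\big(C_{LL}^{-1}\Sigma\big)+\tfrac12\log\det\Sigma$, which is concave on the positive-definite cone; setting the gradient to zero gives $\Sigma^{-1}=C_{LL}^{-1}+\tfrac{1}{\sigma^2}C_{LL}^{-1}C_{LD}C_{DL}C_{LL}^{-1}$, which inverts to $\Sigma^*=C_{LL}\big(C_{LL}+\tfrac{1}{\sigma^2}C_{LD}C_{DL}\big)^{-1}C_{LL}$. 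Concavity confirms these are global maxima. Substituting $\mu^*,\Sigma^*$ back into \eqref{eq:supp_mQ}--\eqref{eq:supp_CQ} with $m=0$ and simplifying the products $C_{LL}^{-1}\Sigma^*C_{LL}^{-1}=\big(C_{LL}+\tfrac{1}{\sigma^2}C_{LD}C_{DL}\big)^{-1}$ recovers \eqref{eq:supp_m_Q_optimal} and \eqref{eq:supp_C_Q_optimal}.

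The main obstacle is the matrix algebra in the third claim: correctly differentiating the trace and log-determinant terms and then carrying out the factorisations of $C_{LL}^{-1}$ that turn the stationarity conditions into the clean stated forms. A secondary subtlety, which I would flag but not belabour, is the mutual absolute continuity justifying the chain rule in the first claim and the pushforward identification needed to reduce $KL(Q,P)$ to a finite-dimensional KL divergence in the second.
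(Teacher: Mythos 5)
Your proposal is correct, and for the first two claims it follows essentially the same route as the paper: Bayes' theorem plus positivity of the Gaussian likelihood to get mutual absolute continuity of $P$ and $P^{F|Y=y}$, the Radon--Nikodym chain rule to split the KL, and then the transformation rule together with the factorised likelihood and the marginals of $Q$ under $D_n$ (read off from Theorem \ref{thm:var_post_measure}) to obtain the closed-form ELBO. The one place you genuinely diverge is the third claim: the paper does not derive $\mu^*$ and $\Sigma^*$ at all, but instead cites Appendix A of \citet{titsias2009report}, observing that the objective has the same form once the kernel matrices there are replaced by $C_{LD}$ and $C_{DD}$; you instead isolate the $\mu$- and $\Sigma$-dependent parts of $\mathcal{L}$, exploit concavity of the quadratic in $\mu$ and of the trace-plus-log-determinant functional in $\Sigma$ on the positive-definite cone, and solve the stationarity conditions directly. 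Your computations check out (the factorisation $C_{LL}^{-1}\Sigma^* C_{LL}^{-1}=(C_{LL}+\tfrac{1}{\sigma^2}C_{LD}C_{DL})^{-1}$ and the rearrangement of the $\mu$-stationarity condition both give the stated forms), and your version is the more self-contained of the two, at the cost of some matrix calculus that the paper outsources; it also makes transparent that the optimum is a global maximum, which the citation-based argument leaves implicit. One small point worth retaining from your own write-up: the reduction of $KL(Q,P)$ to $KL(Q^L,P^L)$ does require the pushforward identification $Q\circ L^{-1}=Q^L$, which you correctly flag and which the paper handles via the same transformation-rule step embedded in its proof of the first claim.
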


\begin{proof}
For the proof of the first statement, by Bayes theorem we know that $P^{F|Y=y}$ is dominated by the prior $P$ with $\frac{d P^{F|Y=y}}{dP}(f) = \frac{p(y|F=f)}{p(y)}$ for any $y \in \bbR^N$. The reverse statement is also true, that $P$ is dominated by $P^{F|=y}$ for fixed $y \in \bbR^N$. This is a consequence of $f \mapsto \frac{p(y|F=f)}{p(y)} >0$ since then for any $A \in \mathcal{B}(E)$ with $P(A)>0$, $P^{F|Y=y}(A)=\int_A \frac{p(y|F=f)}{p(y)} dP(f) >0$, which the contrapositive of $P$ being dominated by $P^{F|Y=y}$. The Radon-Nikodym in this situation is given as $\frac{p(y)}{p(y|F=f)}$. Finally, by definition of $Q$ it is dominated by $P$ and $\frac{dQ}{dP}(f) = (\frac{dQ^L}{dP^L} \circ L \big)(f)$ for any $f \in E$. 

The chain-rule for Radon-Nikodym derivatives therefore tells us that $Q$ is dominated by $P^{F|Y=y}$ with 
\begin{align*}
    \frac{d Q}{d P^{F|Y=y}}(f) = \frac{d Q}{d P}(f) \frac{d P}{d P^{F|Y=y}}(f) =\big(\frac{dQ^L}{dP^L} \circ L \big) (f) \frac{p(y)}{p(y|F=f)}.
\end{align*}
This lets us rewrite the KL divergence as 
\begin{align*}
    KL(Q, P^{F|Y=y}) &= \int_{E} \log\left( \frac{d Q}{d P^{F|Y=y}}\right)(f)dQ(f) \\
    &= \int_{E} \log\left( \big(\frac{dQ^L}{dP^L} \circ L \big) (f) \frac{p(y)}{p(y|F=f)}  \right)dQ(f) \\
    &=\int_{E} \log \left( \big(\frac{dQ^L}{dP^L} \circ L \big) (f) \right)dQ(f) + \int_{E} \log \left( \frac{p(y)}{p(y|F=f)} \right)dQ(f) \\
    &= \int_{E} \log \left( \frac{dQ^L}{dP^L} (u) \right)dQ^L(u)
    - \int_{E} \log p(y|F=f)Q(f) + \log p(y) \\
    &= KL( Q^L, P^L) - \bbE_Q[\log p(y|F)] + \log p(y),
\end{align*}
which proves the first statement.

For the second statement, recall the ELBO is $\mathcal{L}=- KL(Q^L, P^L) + \bbE_Q[\log(y|F)]$. The KL term is clear, since $Q^L$ and $P^L$ are both Gaussian measures with the required mean and covariance. We therefore investigate the log-likelihood term now. Note that $Y_1,\hdots,Y_N$ are conditionally independent given $F=f$. The expected log-likelihood term therefore factorises as 
\begin{align*}
    \bbE_Q[\log p(y|F=f) ] = \sum_{n=1}^N \bbE_Q [ \log p(y_n|F) ].
\end{align*}
Setting $V=D_n(F)=(F,D_n)_E$ and noting that $p(y_n|F=f)$ depends on $F$ only through $D_n$, meaning $p(y_n|F=f) = p(y_n| V = D_n(f))$, we see
\begin{align*}
\bbE_Q [ \log p(y_n|F) ] &= \int_{E}\log p\big(y_n| V = D_n(f) \big) dQ(f) = \int_{E} \log p\big(y_n| V = v \big) dQ^{D_n}(v).
\end{align*}
Note that $Q^{D_n}$ is Gaussian with mean $\mu_V \coloneqq (m,D_n)_E + C_{LL}^{-1} \big( \mu - (m,L)_E \big) C_{LD_n}$ and variance $\sigma^2_V:= (C D_n, D_n)_E + C_{D_nL}^{} C_{ L L}^{-1}(\Sigma - C_{LL})C_{LL}^{-1}C_{L D_n}$ as $Q$ is a Gaussian measure. So using the parametric form of the pdf of a Gaussian 
\begin{align*}
    \bbE_Q[\log p(y_n|F)] &= \bbE_Q \big[ - \frac{1}{2\sigma^2} (y_n-V)^2 - \log( \sigma) - \frac{1}{2} \log(2 \pi) \big] \\
    &= \bbE_Q \big[  - \frac{1}{2\sigma^2} (y_n-\mu_V)^2 - \log( \sigma) - \frac{1}{2} \log(2 \pi)  -\frac{1}{2\sigma^2} (\mu_V-V_n)^2 \big] \\
    &=  \log \mathcal{N}(y_n| \mu_V, \sigma^2)  - \frac{1}{2 \sigma^2} \bbE_Q \big[ (\mu_v-V_n)^2 \big] \\
    &= \log \mathcal{N}(y_n| \mu_V, \sigma^2)  - \frac{1}{2 \sigma^2} \, \sigma_V^2,
\end{align*}
which proves the claim.

Finally, for the third statement, in Appendix A of \citet{titsias2009report} the optimal form of $\mu$ and $\Sigma$ are given. Note that we have the same objective function as \citet{titsias2009report} with the only difference that the kernel matrices $k_{nm}$ and $k_{mm}$ need to be replaced with the covariance matrices $C_{LD}$ and $C_{DD}$. Plugging in the optimal form for $\mu^*$ and $\Sigma^*$ into \eqref{eq:supp_mQ} and \eqref{eq:supp_CQ} gives rise to $m_{Q^*}$ and $C_{Q^*}$.
\end{proof}

\section{Proof of Section \ref{sec:Connections}: Connections between GRE Regression and KRR Nyström}\label{sup:Connections}

\begin{theorem}\label{thm:connection}
Let $F \sim \mathcal{N}(0,C)$ be a GRE in $E=C(\calX,\bbR)$ with covariance operator $C$ as defined in \eqref{eq:GRE_CovOp} and assumed pointwise noisy data is observed as described in Section \ref{subsec:example}. Let $L_m = \mu_m$, where $\{\mu_m\}_{m=1}^M \subset R(\calX)$ be the features used in the variational approximation. Set $\mathcal{M}=\{C\mu_{m}\}_{m=1}^{M}$ where $C \mu_m = \int k(\cdot,x')d\mu_m(x')$ as the approximating family in the Nystr\"{o}m approximation. Then for $\sigma^{2} = N \lambda$ the Nystr\"{o}m KRR estimator $\widehat{f}$ in Section \ref{sec:Connections} is equal to the mean $m_{Q^{*}}$, given by \eqref{eq:supp_m_Q_optimal}, of the optimal $Q^{*}$ from the variational family $\calQ$.
\end{theorem}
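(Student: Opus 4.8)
The plan is to push both the Nystr\"{o}m KRR minimiser $\widehat{f}$ and the optimal variational mean $m_{Q^*}$ into the same finite-dimensional linear algebra, the bridge being a dictionary identifying the GRE covariance quantities $C_{LL}, C_{LD}, C_{TL}$ with the RKHS geometry of the basis functions $\phi_m \coloneqq C\mu_m$ that span $\calM$. First I would note that since $k$ is continuous and $\calX$ compact, each $\phi_m = \int k(\cdot,x')\,d\mu_m(x')$ is a well-defined element of $H_k$, realised as the $H_k$-valued Bochner integral of the canonical features $k(\cdot,x')$ against $\mu_m$.

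The key identity is that these GRE covariance matrices are precisely the natural RKHS objects for this basis. Using the reproducing property and commuting the inner product with the integrals,
\begin{align*}
\langle \phi_m, \phi_{m'}\rangle_k = \int_{\calX}\int_{\calX} k(x,x')\,d\mu_m(x)\,d\mu_{m'}(x') = (C\mu_m,\mu_{m'})_E = (C_{LL})_{mm'},
\end{align*}
so the RKHS Gram matrix of $\{\phi_m\}$ equals $C_{LL}$. Similarly $\phi_m(x_n) = \langle \phi_m, k(\cdot,x_n)\rangle_k = (C_{LD})_{mn}$, so the evaluation matrix $\Phi \in \bbR^{N\times M}$ with $\Phi_{nm} = \phi_m(x_n)$ equals $C_{DL}$, and $\phi_m(x) = (C_{TL})_m$ for the prediction functional $T = \delta_x$.

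With this dictionary the Nystr\"{o}m problem \eqref{eq:KRR_Nystroem} collapses to a finite minimisation: writing $f = \sum_m \alpha_m \phi_m \in \calM$ gives $f(x_n) = (C_{DL}\alpha)_n$ and $\|f\|_k^2 = \alpha^{\top} C_{LL}\alpha$, so the objective becomes
\begin{align*}
\min_{\alpha\in\bbR^M} \ \frac{1}{N}\,\|C_{DL}\alpha - y\|^2 + \lambda\,\alpha^{\top} C_{LL}\alpha.
\end{align*}
Setting the gradient to zero gives the normal equations $(C_{LD}C_{DL} + N\lambda\,C_{LL})\alpha = C_{LD}y$, hence $\alpha = (C_{LD}C_{DL} + N\lambda\,C_{LL})^{-1}C_{LD}y$. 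Evaluating at $x$ yields $\widehat{f}(x) = C_{TL}\alpha = C_{TL}(C_{LD}C_{DL} + N\lambda\,C_{LL})^{-1}C_{LD}y$, which after substituting $\sigma^2 = N\lambda$ is exactly the formula for $m_{Q^*}(x) = (m_{Q^*},\delta_x)_E$ in \eqref{eq:supp_m_Q_optimal}.

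I expect the main obstacle to be the covariance--RKHS identity: one must justify both that $\phi_m \in H_k$ and that the $H_k$ inner product of two such functions equals $\int\int k\,d\mu_m\,d\mu_{m'}$, i.e. that the Bochner integral commutes with the inner product. This is exactly where the GRE viewpoint earns its keep, since it supplies $C_{LL}, C_{LD}, C_{TL}$ directly as covariances of the features defining $\calM$; once the identification is in hand the rest is routine finite-dimensional calculus. A minor point worth flagging is the invertibility assumption on $C_{LD}C_{DL} + N\lambda\,C_{LL}$ (equivalently linear independence of $\{\phi_m\}$), which matches the hypotheses implicit in \eqref{eq:supp_m_Q_optimal}.
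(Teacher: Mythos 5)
Your proposal is correct and follows essentially the same route as the paper's proof: reduce the Nystr\"{o}m problem to a quadratic in $\alpha$ over the span of $\{C\mu_m\}$, solve the normal equations, and match the result to $m_{Q^*}$ via the identifications $K_{\calM\calM}=C_{LL}$ and $K_{\calM X}=C_{LD}$. The only cosmetic difference is that you justify the Gram-matrix identity directly from the reproducing property and Fubini for the Bochner integral, whereas the paper cites the Cameron--Martin/RKHS correspondence in \citet[Page 316]{ghosal2017fundamentals}; both are adequate.
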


\begin{proof}
First of all note that $\mathcal{M}\subset H_{k}$, the RKHS of $k$. For a proof see \citep[Lemma 11.4]{ghosal2017fundamentals}. This means that the structure of $H_{k}$ can be leveraged to deduce $\widehat{f}$. Specifically, as every $f \in \mathcal{M}$ can be expressed as $f = \sum_{m=1}^M \alpha_m C \mu_m $ for some $\alpha \in \bbR^M$ we can solve the finite dimensional optimisation problem
\begin{align*}
    J(\alpha) \coloneqq \frac{1}{N} \sum_{n=1}^N \big( y_n - \sum_{m=1}^M \alpha_m C\mu_m(x_n) \big)^2 +  \lambda \| \sum_{m=1}^M \alpha_m C\mu_m\|_k^2,
\end{align*}
in $\alpha \in \bbR^M$ to find the KRR Nyström estimator. Expanding $J(\alpha)$
\begin{align*}
    J(\alpha)  &= \frac{1}{N} \sum_{n=1}^N y_n^2 - 2 \frac{1}{N} \sum_{n=1}^N \sum_{m=1}^M y_n \alpha_m g_m(x_n) +
    \frac{1}{N} \sum_{n=1}^N \sum_{m,m'=1}^M \alpha_m \alpha_{m'}  C\mu_m(x_n) C\mu_{m'}(x_n) \\
    &\quad  +\lambda \sum_{m,m'=1}^M \alpha_m \alpha_{m'} \langle C\mu_m , C\mu_{m'} \rangle_k \\
    &= \frac{1}{N} y^\top y - 2 \frac{1}{N} y^\top K_{X \mathcal{M}} \alpha + 
    \frac{1}{N} \alpha^\top K_{ \mathcal{M} X} K_{X \mathcal{M}} \alpha +
     \lambda \alpha^{\top} K_{\mathcal{M} \mathcal{M}} \alpha,
\end{align*}
where $(K_{\calM X})_{mn}= C\mu_{m}(x)$ and $(K_{\calM\calM})_{mm'}=\langle C\mu_{m}, C\mu_{m'} \rangle_{H_k}$ for $n=1,\hdots,N$ and $m,m'=1,\hdots,M$.
Standard rules for differentiation give
\begin{align*}
    J'(\alpha)&= -\frac{2}{N} K_{X \mathcal{M} } y +  \frac{2}{N}  K_{X \mathcal{M} }  K_{ \mathcal{M} X} \alpha + 2 \lambda K_{\mathcal{M} \mathcal{M}} \alpha \\
    J''(\alpha)&= \frac{2}{N} K_{X \mathcal{M}} K_{ \mathcal{M} X} + 2 \lambda K_{\mathcal{M} \mathcal{M}}.
\end{align*}
It is easy to see that $J'(\alpha)=0$ for $\alpha = (K_{\calM X} K_{X\calM} + N\lambda K_{\calM\calM} )^{-1} K_{\calM X} y$ and that $J''(\alpha)$ is positive definite. Hence the KRR estimator is given as 
\begin{align*}
     \widehat{f}(x) = \sum_{m=1}^M \alpha_m C\mu_{m}(x),
\end{align*}
with $\alpha = (\alpha_1,\hdots,\alpha_M)$ given as $\alpha = (K_{\calM X} K_{X\calM} + N\lambda K_{\calM\calM} )^{-1} K_{\calM X} y$.

On the other hand, from \eqref{eq:supp_m_Q_optimal} with $T= \delta_x$ and $D_n = \delta_{x_n}$
\begin{align*}
    m_{Q^*}(x) = \sum_{m=1}^M \beta_m  C \mu_m (x),
\end{align*}
with $\beta =  ( \sigma^2 C_{LL} + C_{LD} C_{DL})^{-1}  C_{LD}y$.  

The only thing left to show is $K_{\calM X} = C_{LD}$ and $K_{\calM\calM} = C_{LL}$. This is a consequence of the relationship between the so-called Cameron-Martin space of a GRE and the RKHS. In particular this exact equivalence is outlined by \citet[Page 316]{ghosal2017fundamentals} which completes the proof.
\end{proof}

\end{document}